\newtheorem{theorem}{Theorem}
\newtheorem{lemma}[theorem]{Lemma}
\newtheorem{definition}{Definition}
\newtheorem*{example*}{Example}
\newcommand{\comment}[1]{}
\newcommand{\smallmath}[1]{\mbox{\scriptsize ${#1}$}}
\newcommand{\mathtext}[1]{\mathit{#1}}
\newcommand{\cnL}{\mbox{${\cal L}$}}                       
\newcommand{\cnSs}{\mbox{${\cal S}$}}                      
\newcommand{\cnAs}{\mbox{${\cal A}$}}                      
\newcommand{\cnTs}{\mbox{${\cal T}$}}                      
\newcommand{\cnOs}{\mbox{${\cal O}$}}                      
\newcommand{\cnCs}{\mbox{${\cal C}$}}                      
\newcommand{\cnN}{\mbox{${N}$}}                            
\newcommand{\cnPs}{\mbox{$\Pi$}}                           
\newcommand{\cnPSF}{\mbox{$\lambda$}}                      
\newcommand{\cnOUO}{\mbox{$\tau$}}                         
\newcommand{\cnAUO}{\mbox{$\gamma$}}                       
\newcommand{\cnS}{\mbox{$s$}}                          
\newcommand{\cnP}{\mbox{$\pi$}}                        
\newcommand{\cnAss}{\mbox{${a}$}}                      
\newcommand{\cnH}{\mbox{$H$}}                   
\newcommand{\cnNs}{\mbox{${\cal N}$}}                  
\newcommand{\cnFPs}{\mbox{${F}$}}                      
\newcommand{\cnAN}{\mbox{$Q$}}                         
\newcommand{\cnAH}{\mbox{$\cal X$}}                    
\newcommand{\cnANs}{\mbox{$\cal Q$}}                   
\newcommand{\cnSF}{\mbox{$\phi$}}                      
\newcommand{\cnTF}{\mbox{$\psi$}}                      
\newcommand{\cnCF}{\mbox{$\varrho$}}                   
\newcommand{\cnGG}{\mbox{$\Psi$}}                      
\newcommand{\cnEF}{\mbox{$f_{\emptyset}$}}             
\newcommand{\cnAPNU}{\mbox{\text{\bf PredUpdate$'$}}}       
\newcommand{\cnAPGU}{\mbox{\text{\bf PredUpdate}}}       
\newcommand{\cnPGU}{\mbox{\text{\bf Update}}}          
\newcommand{\exAH}{\mbox{$/\!\!\!-\!\!\!\backslash$}}
\newcommand{\pearlD}{\mbox{$d$}}
\newcommand{\pearlC}{\mbox{$c$}}
\newcommand{\cnNorm}{\mbox{$\eta$}}  
\title{Perceptual Context in Cognitive Hierarchies}
\author{Bernhard Hengst
\and
Maurice Pagnucco
\and
\vspace{0.5mm}
David Rajaratnam\\
\vspace{1mm}
{\bf \Large Claude Sammut
\and
Michael Thielscher}\\
School of Computer Science and Engineering\\
The University of New South Wales, Australia\\
\{bernhardh,morri,daver,claude,mit\}@cse.unsw.edu.au
}
\author{
Bernhard Hengst\\
University of NSW\\ Australia
\And 
Maurice Pagnucco\\
University of NSW\\ Australia
\And
David Rajaratnam\\
University of NSW\\ Australia
\AND
Claude Sammut\\
University of NSW\\ Australia
\And
Michael Thielscher\\
University of NSW\\ Australia
}
\begin{document}

\maketitle

\begin{abstract}
Cognition does not only depend on bottom-up sensor feature abstraction, but also relies on contextual information being passed top-down. Context is higher level information that helps to predict belief states at lower levels. The main contribution of this paper is to provide a formalisation of perceptual context and its integration into a new process model for cognitive hierarchies. Several simple instantiations of a cognitive hierarchy are used to illustrate the role of context. Notably, we demonstrate the use context in a novel approach to visually track the pose of rigid objects with just a 2D camera. 
\end{abstract}


\section{Introduction}

There is strong evidence that intelligence necessarily involves hierarchical structures~\cite{Ahyby52,1087032,dietterich00hierarchical,Albus-2001,beer66decision,turchin77phenomenon,hubel79brain,Minsky:1986:SM:22939,drescher91made,dayan92feudal,kaelbling93hierarchical,Nilsson01,Konidaris05122011,jong2010structured,Marthi06,Bakker04hierarchicalreinforcement}.
\citeauthor{clark2016framework}~(\citeyear{clark2016framework}) recently have addressed the formalisation of cognitive hierarchies that allow for the integration of disparate representations, including symbolic and sub-symbolic representations, in a framework for cognitive robotics.  Sensory information processing is upward-feeding, progressively abstracting more complex state features, while behaviours are downward-feeding progressively becoming more concrete, ultimately controlling robot actuators.



However, neuroscience suggests that the brain is subject to top-down cognitive influences for attention, expectation and perception \cite{Gilbert2013Top-down-influe}. Higher level signals carry important information to facilitate scene interpretation. For example, the recognition of the Dalmatian, and the disambiguation of the symbol $\exAH$ in Figure \ref{figDalmationCat} intuitively show that higher level context is necessary to correctly interpret these images\footnote{Both of these examples appear in~\cite{Johnson:2010:DMM:1875171} but are also well-known in the cognitive psychology literature.}.
\begin{figure}[ht]
	\centering
	\includegraphics[width=0.23\textwidth]{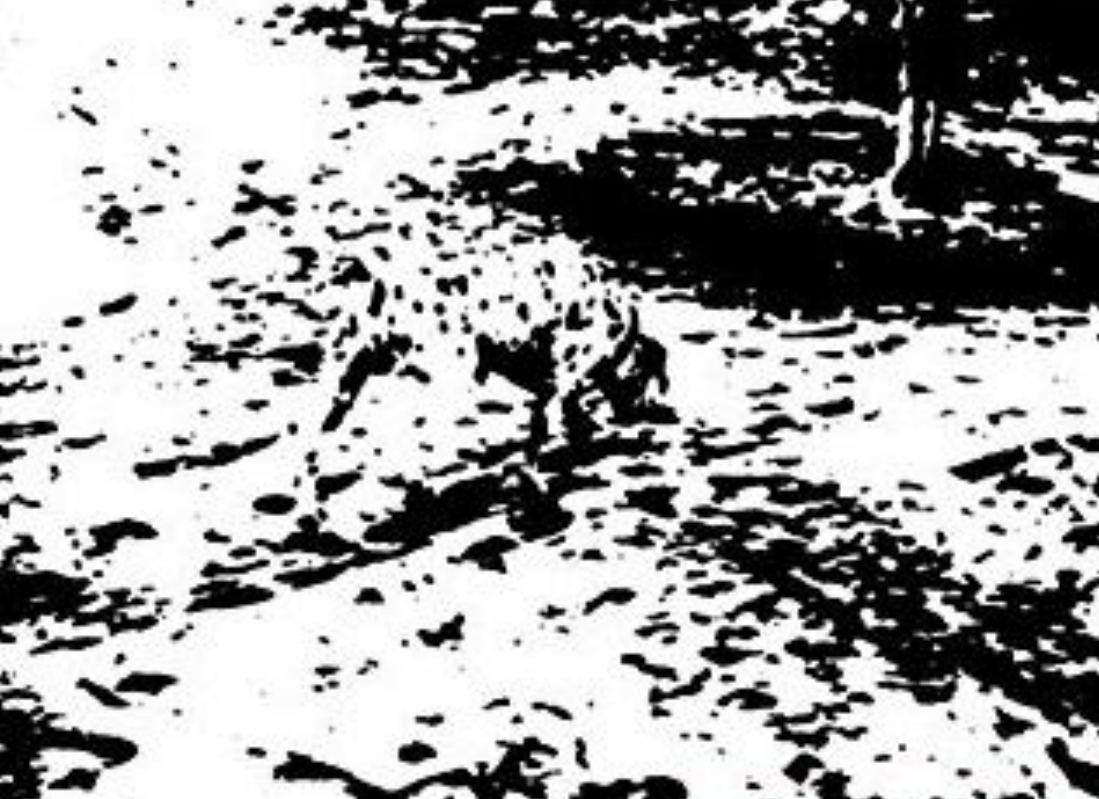}
	\includegraphics[width=0.23\textwidth]{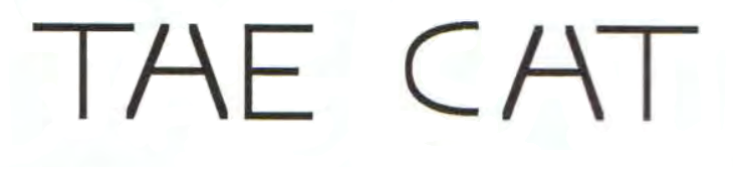}
	\caption{The image on the left would probably be indiscernible without prior knowledge of Dalmations. The ambiguous symbol $\exAH$ on the right can be interpreted as either an ``H'' or an ``A'' depending on the word context.}
	\label{figDalmationCat}
\end{figure}
Furthermore, the human brain is able to make sense of dynamic 3D scenes from light falling on our 2D retina in varying lighting conditions. Replicating this ability is still a challenge in artificial intelligence and computer vision, particularly when objects move relative to each other, can occlude each other, and are without texture. Prior, more abstract contextual knowledge is important to help segment images into objects or to confirm the presence of an object from faint or partial edges in an image.

In this paper we extend the existing cognitive hierarchy formalisation \cite{clark2016framework} by introducing the notion of perceptual context, which modifies the beliefs of a child node given the beliefs of its parent nodes. It is worth emphasising that defining the role of context as a top-down predictive influence on a node's belief state and the corresponding process model that defines how the cognitive hierarchy evolves over time is non-trivial. Our formalisation captures the dual influences of context and behaviour as a predictive update of a node's belief state.
Consequently, \emph{the main contribution of this paper is the inclusion and formalisation of contextual influences as a predictive update within a cognitive hierarchy.}

As a meta-framework, the cognitive hierarchy requires instantiation. We provide two simple instantiation examples to help illustrate the formalisation of context. The first is a running example using a small belief network. The second example involves visual servoing to track a moving object. This second example quantifies the benefit of context and demonstrates the role of context in a complete cognitive hierarchy including behaviour generation.

As a third, realistic and challenging example that highlights the importance of context we consider the tracking of the 6 DoF pose of multiple, possibly occluded, marker-less objects with a 2D camera. We provide a novel instantiation of a cognitive hierarchy for a real robot using the context of a spatial cognitive node modelled using a 3D physics simulator. Note, this formalisation is provided in outline only due to space restrictions.

Finally, for completeness of our belief network running example, we prove that general belief propagation in causal trees  \cite{pearl88probabilistic} can be embedded into our framework, illustrating the versatility of including context in the cognitive hierarchy. We include this proof as an appendix.



\comment{
The existing formal meta-model of cognitive hierarchies~\cite{clark2016framework} does not include a notion of context. In this paper we extend this prior work to include contextual elements, a context function and a revised prediction update process. 
}

\section{The Architectural Framework}

For the sake of brevity the following presentation both summarises and extends the formalisation of cognitive hierarchies as introduced in~\cite{clark2016framework}. We shall, however, highlight how our contribution differs from their work. The essence of this framework is to adopt a meta-theoretic approach, formalising the interaction between abstract cognitive nodes, while making no commitments about the representation and reasoning mechanism within individual nodes.

\subsection{Motivating Example}
\label{s:example}
As an explanatory aid to formalising the use of context in a hierarchy we will use the disambiguation of the symbol $\exAH$ in Figure \ref{figDalmationCat} as a simple running example. This system can be modelled as a two layer causal tree updated according Pearl's Bayesian belief propagation rules~\cite{pearl88probabilistic}. The lower-level layer disambiguates individual letters while the higher-level layer disambiguates complete words (Figure~\ref{figTheCatHierarchy}). We assume that there are only two words that are expected to be seen, with equal probability: ``THE'' and ``CAT''.

\begin{figure}[ht]
	\centering
	\includegraphics[width=0.23\textwidth]{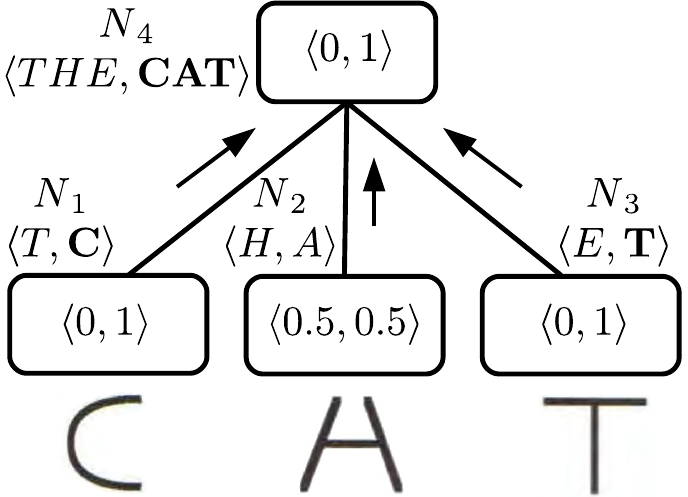}
\hspace{0.3mm}
	\includegraphics[width=0.23\textwidth]{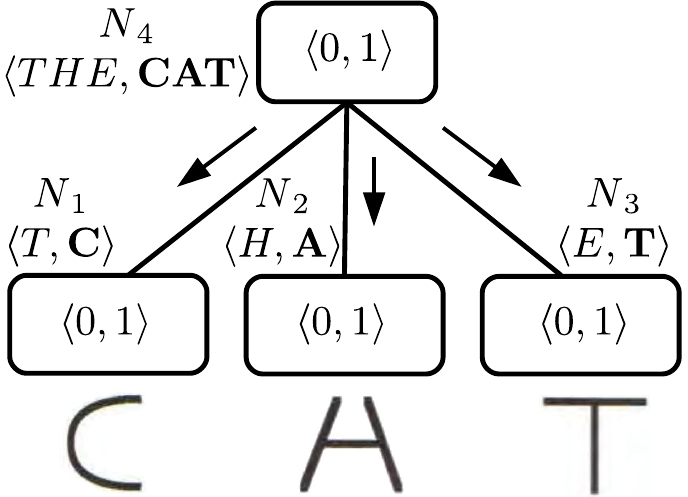}
	\caption{Disambiguating the symbol $\exAH$ requires context from the word recognition layer.}
	\label{figTheCatHierarchy}
\end{figure}

There are three independent letter sensors with the middle sensor being unable to disambiguate the observed symbol $\exAH$ represented by the conditional probabilities $p(\exAH | H) = 0.5$ and $p(\exAH | A) = 0.5$. These sensors feed into the lower-level nodes (or \emph{processors} in Pearl's terminology), which we label as $\cnN_1,\cnN_2,\cnN_3$. The results of the lower level nodes are combined at $\cnN_4$ to disambiguate the observed word.

Each node maintains two state variables; the \emph{diagnostic} and \emph{causal} supports (displayed as the pairs of values in Figure~\ref{figTheCatHierarchy}). Intuitively, the diagnostic support represents the knowledge gathered through sensing while the causal support represents the contextual bias. A node's overall belief is calculated by the combination of these two state variables.

While sensing data propagates up the causal tree, the example highlights how node $\cnN_2$ is only able to resolve the symbol $\exAH$ in the presence of contextual feedback from $\cnN_4$.

\subsection{Nodes}
A cognitive hierarchy consists of a set of nodes. Nodes are tasked to achieve a goal or maximise future value. They have two primary functions: world-modelling and behaviour-generation. World-modelling involves maintaining a \emph{belief state}, while behaviour-generation is achieved through \emph{policies}, where a policy maps states to sets of actions. A node's belief state is modified by sensing or by the combination of actions and higher-level context. We refer to this latter as \emph{prediction update} to highlight how it sets an expectation about what the node is expecting to observe in the future.

\begin{definition}
\label{d:cnode}
A cognitive language is a tuple $\cnL =(\cnSs, \cnAs, \cnTs, \cnOs, \cnCs)$, where $\cnSs$ is a set of belief states, $\cnAs$ is a set of actions, $\cnTs$ is a set of task parameters, $\cnOs$ is a set of observations, and $\cnCs$ is a set of contextual elements. A cognitive node is a tuple $\cnN = (\cnL, \cnPs, \cnPSF, \cnOUO,  \cnAUO, \cnS^0, \cnP^0)$ s.t:
\begin{itemize}
\item $\cnL$ is the cognitive language for $\cnN$, with initial belief state $\cnS^0\in\cnSs$.
\item $\cnPs$ a set of policies such that for all $\cnP \in \cnPs$, $\cnP : \cnSs \rightarrow 2^{\smallmath{\cnAs}}$, with initial policy $\cnP^0 \in \cnPs$.
\item A policy selection function $\cnPSF\!: 2^{\smallmath{\cnTs}} \rightarrow \cnPs$, s.t. $\cnPSF(\{\}) = \cnP^0$.
\item A observation update operator $\cnOUO: 2^{\smallmath{\cnOs}} \times \cnSs \rightarrow \cnSs$.
\item A prediction update operator $\cnAUO: 2^{\smallmath{\cnCs}} \times 2^{\smallmath{\cnAs}} \times \cnSs \rightarrow \cnSs$.
\end{itemize}
\end{definition}

Definition~\ref{d:cnode} differs from \cite{clark2016framework} in two ways: the introduction of a set of context elements in the cognitive language, and the modification of the \emph{prediction} update operator, previously called the \emph{action} update operator, to include context elements when updating the belief state.

This definition can now be applied to the motivating example to instantiate the nodes in the Bayesian causal tree. We highlight only the salient features for this instantiation.

\begin{example*}
Let $E=\{\langle x,y\rangle~|~0\!\le\!x,y\leq\!1.0\}$ be the set of probability pairs, representing the recognition between two distinct features. For node $\cnN_2$, say (cf.\ Figure~\ref{figTheCatHierarchy}), these features are the letters ``H'' and ``A'' and for $\cnN_4$ these are the words ``THE'' and ``CAT''. The set of belief states for $\cnN_2$ is $\cnSs_2 = \{ \langle\langle\pearlD\rangle,\pearlC\rangle ~|~  \pearlD,\pearlC \in E\}$, where $\pearlD$ is the \emph{diagnostic} support and $\pearlC$ is the \emph{causal} support. Note, the vector-in-vector format allows for structural uniformity across nodes. Assuming equal probability over letters, the initial belief state is $\langle\langle\langle 0.5,0.5\rangle\rangle, \langle 0.5,0.5 \rangle\rangle$. For $\cnN_4$ the set of belief states is $\cnSs_4= \langle\langle\pearlD_1,\pearlD_2,\pearlD_3\rangle,\pearlC\rangle ~|~  \pearlD_1,\pearlD_2,\pearlD_3,\pearlC \in E\}$, where $d_i$ is the contribution of node $\cnN_i$ to the diagnostic support of $\cnN_4$. 

For $\cnN_2$ the \emph{context} is the causal supports from above, $\cnCs_2\!=\!E$, while the \emph{observations} capture the influence of the ``H''-``A'' sensor, $\cnOs_2\!=\!\{\langle\pearlD\rangle ~|~  \pearlD\in E\}$. In contrast the observations for $\cnN_4$ need to capture the influence of the different child diagnostic supports, so $\cnOs_4 = \{\langle\pearlD_1,\pearlD_2,\pearlD_3\rangle ~|~  \pearlD_1,\pearlD_2,\pearlD_3 \in E\}$.

The observation update operators need to replace the diagnostic supports of the current belief with the observation, which is more complicated for $\cnN_4$ due to its multiple children, $\cnOUO_2(\{\vec{d_1},\vec{d_2},\vec{d_3}\}, \langle \vec{d},c\rangle) = \langle \Sigma_{i=1}^3 \vec{d_i},c\rangle$. Ignoring the influence of actions, the prediction update operator simply replaces the causal support of the current belief with the context from above, so $\cnAUO_2(\{c'\},\emptyset, \langle\langle\vec{d}\rangle,c\rangle) = \langle\langle\vec{d}\rangle,c'\rangle$.
\end{example*}

\subsection{Cognitive Hierarchy}

Nodes are interlinked in a hierarchy, where sensing data is passed up the \emph{abstraction hierarchy}, while actions and context are sent down the hierarchy (Figure~\ref{figNodeInteractions}).

\begin{figure}[ht]
	\centering
	\includegraphics[height=0.25\textwidth]{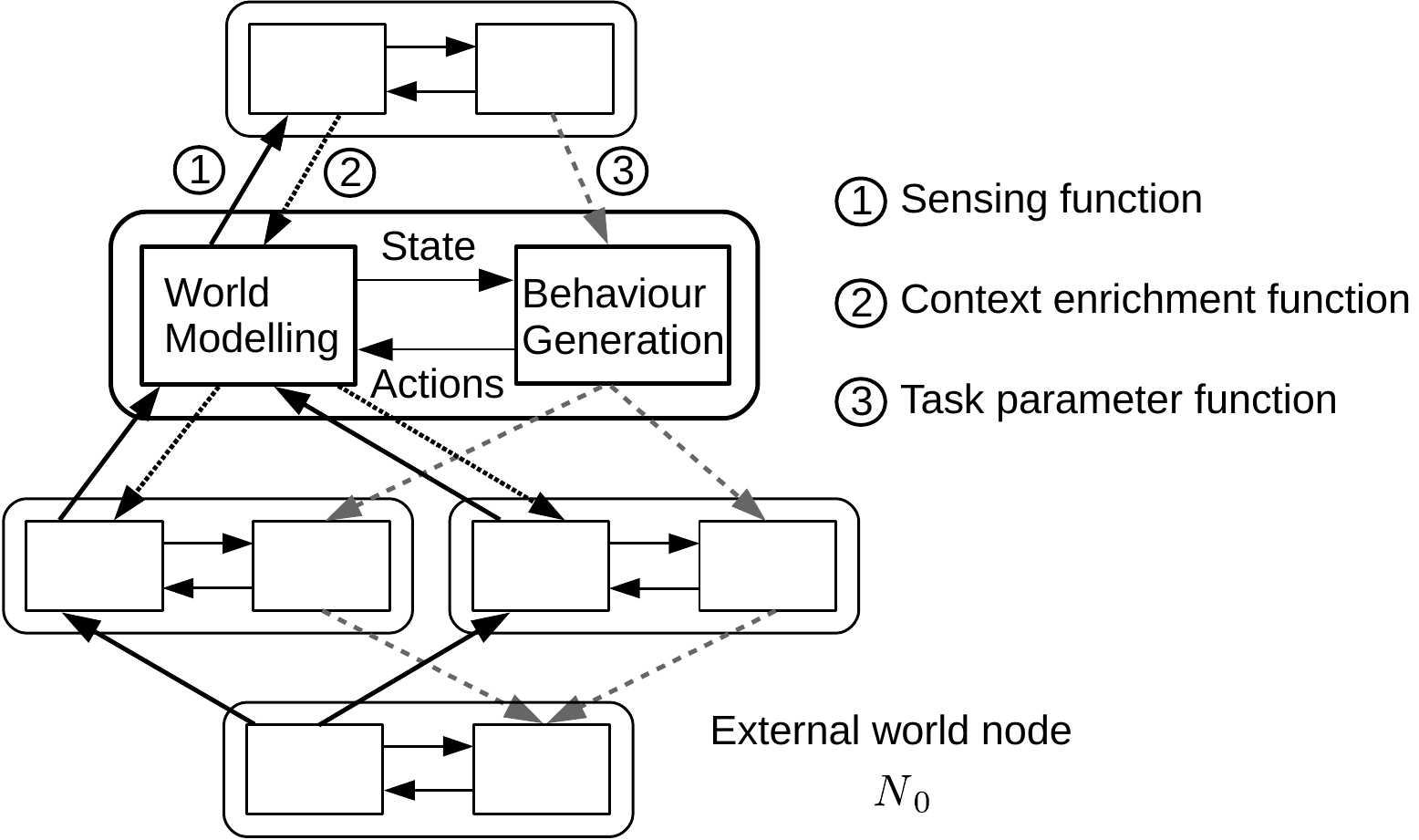}
	\caption{A cognitive hierarchy, highlighting internal interactions as well as the sensing, action, and context graphs.}
        \label{figNodeInteractions}
\end{figure}

\begin{definition}
\label{d:chierarchy}
A cognitive hierarchy is a tuple $\cnH = (\cnNs,\cnN_0,\cnFPs)$ s.t:
\begin{itemize}
\item $\cnNs$ is a set of cognitive nodes and $\cnN_0\in \cnNs$ is a distinguished node corresponding to the external world.
\item $\cnFPs$ is a set of function triples $\langle \cnSF_{i,j}, \cnTF_{j,i}, \cnCF_{j,i} \rangle \in \cnFPs$ that connect nodes $\cnN_i,\cnN_j\in\cnNs$ where:
  \begin{itemize}
    \item $\cnSF_{i,j}: \cnSs_i \rightarrow 2^{\smallmath{\cnOs_j}}$ is a sensing function, and
    \item $\cnTF_{j,i}: 2^{\smallmath{\cnAs_j}} \rightarrow 2^{\smallmath{\cnTs_i}}$ is a task parameter function.
    \item $\cnCF_{j,i}: \cnSs_j \rightarrow 2^{\smallmath{\cnCs_i}}$ is a context enrichment function.
  \end{itemize}
\item Sensing graph: each $\cnSF_{i,j}$ represents an edge from node $\cnN_i$ to $\cnN_j$ and forms a directed acyclic graph (DAG) with $\cnN_0$ as the unique source node of the graph.
\item Prediction graph: the set of task parameter functions (equivalently, the context enrichment functions) forms a converse to the sensing graph such that $\cnN_0$ is the unique sink node of the graph.
\end{itemize}
\end{definition}

Definition~\ref{d:chierarchy} differs from the original with the introduction of the \emph{context enrichment} functions and the naming of the \emph{prediction graph} (originally the \emph{action graph}). The connection between nodes consists of triples of sensing, task parameter and context functions. The sensing function extracts observations from a lower-level node in order to update a higher level node, while the context enrichment function performs the converse. The task parameter function translates a higher-level node's actions into a set of task parameters, which is then used to select the active policy for a node.

Finally, the external world is modelled as a distinguished node, $\cnN_0$. Sensing functions allow other nodes to observe properties of the external world, and task parameter functions allow actuator values to be modified, but $\cnN_0$ doesn't ``sense'' properties of other nodes, nor does it generate task parameters for those nodes. Similarly, context enrichment functions connected to $\cnN_0$ would simply return the empty set, unless one wanted to model unusual properties akin to the quantum effects of observations on the external world. Beyond this, the internal behaviour of $\cnN_0$ is considered to be opaque.

The running example can now be encoded formally as a cognitive hierarchy, again with the following showing only the salient features of the encoding.

\begin{example*}
We construct a hierarchy $\cnH\!=\!(\cnNs, \cnN_0, \cnFPs)$, with $\cnNs\!=\!\{ \cnN_0, \cnN_1, \ldots, \cnN_4\}$. The function triples in $\cnFPs$ will include $\cnSF_{0,2}$ for the visual sensing of the middle letter, and $\cnSF_{2,4}$ and $\cnCF_{4,2}$ for the sensing and context between $\cnN_2$ and $\cnN_4$.

The function $\cnSF_{0,2}$ returns the probability of the input being the characters ``H'' and ``A''. Here $\cnSF_{0,2}(\exAH) = \{\langle 0.5,0.5\rangle\}$.

Defining $\cnSF_{2,4}$ and $\cnCF_{4,2}$ requires a conditional probability matrix $M=
\begin{bmatrix}
  1 & 0\\
  0 & 1\\
\end{bmatrix}
$
to capture how the letters ``H'' and ``A'' contribute to the recognition of ``THE'' and ``CAT''.

For sensing from $\cnN_2$ we use zeroed vectors to prevent influence from the diagnostic support components from $\cnN_1$ and $\cnN_2$. Hence
 $\cnSF_{2,4}(\langle\langle\pearlD\rangle,\pearlC\rangle)\!=\{\langle\langle 0,0\rangle, \!\cnNorm\cdot M \cdot \pearlD^T,\langle 0,0\rangle\rangle\}$, where $\pearlD^T$ is the transpose of vector $\pearlD$, and $\cnNorm$ is a normalisation constant.

For context we capture how $\cnN_4$'s causal support and its diagnostic support components from $\cnN_1$ and $\cnN_2$ influences the causal support of $\cnN_2$. Note, that this also prevents any feedback from $\cnN_2$'s own diagnostic support to its causal support. So,
$\cnCF_{4,2}(\langle\langle\pearlD_1,\pearlD_2,\pearlD_3\rangle,\pearlC\rangle)\!=\!\{\cnNorm\cdot (\pearlD_1 \cdot \pearlD_3 \cdot \pearlC) \cdot M\}$.
\end{example*}

\subsection{Active Cognitive Hierarchy}

The above definitions capture the static aspects of a system but require additional details to model its operational behaviour. Note, the following definitions are unmodified from the original formalism and are presented here because they are necessary to the developments of later sections.

\begin{definition}
\label{d:activenode}
An active cognitive node is a tuple $\cnAN = (\cnN, \cnS, \cnP, \cnAss)$ where: 1) $\cnN$ is a cognitive node with $\cnSs$, $\cnPs$, and $\cnAs$ being its set of belief states, set of policies, and set of actions respectively, 2) $\cnS \in \cnSs$ is the current belief state, $\cnP \in \cnPs$ is the current policy, and $\cnAss \in 2^{\smallmath{\cnAs}}$ is the current set of actions.
\end{definition}

Essentially an active cognitive node couples a (static) cognitive node with some dynamic information; in particular the current belief state, policy and set of actions.

\begin{definition}
\label{d:activehierarchy}
An active cognitive hierarchy is a tuple $ \cnAH = (\cnH, \cnANs)$ where $\cnH$ is a cognitive hierarchy with set of cognitive nodes $\cnNs$ such that for each $\cnN \in \cnNs$ there is a corresponding active cognitive node $\cnAN=(\cnN, \cnS, \cnP, \cnAss) \in \cnANs$ and vice-versa.
\end{definition}

The active cognitive hierarchy captures the dynamic state of the system at a particular instance in time. Finally, an \emph{initial active cognitive hierarchy} is an active hierarchy where each node is initialised with the initial belief state and policy of the corresponding cognitive node, as well as an empty set of actions.

\subsection{Cognitive Process Model}
\label{s:processmodel}

The \emph{process model} defines how an active cognitive hierarchy evolves over time and consists of two steps. Firstly, sensing observations are passed up the hierarchy, progressively updating the belief state of each node. Next, task parameters and context are passed down the hierarchy updating the active policy, the actions, and the belief state of the nodes.

We do not present all definitions here, in particular we omit the definition of the \emph{sensing update} operator as this remains unchanged in our extension. Instead we define a \emph{prediction update} operator, replacing the original \emph{action update}, with the new operator incorporating both context and task parameters in its update.
First, we characterise the updating of the beliefs and actions for a single active cognitive node.

\begin{definition}
\label{d:actionNU}
Let $\cnAH\!=\!(\cnH, \cnANs)$ be an active cognitive hierarchy with $\cnH\!=\!(\cnNs, \cnN_0, \cnFPs)$. The prediction update of $\cnAH$ with respect to an active cognitive node $\cnAN_i\!=\!(\cnN_i, \cnS_i, \cnP_i, \cnAss_i) \in \cnANs$, written as $\cnAPNU(\cnAH,\cnAN_i)$ is an active cognitive hierarchy $\cnAH' = (\cnH, \cnANs')$ where
$\cnANs' = \cnANs\!\setminus\!\{\cnAN_i\} \cup \{\cnAN_i'\}$ and $\cnAN'_i = (\cnN_i, \cnAUO_i(C, \cnAss'_i,\cnS_i), \cnP'_i, \cnAss'_i)$ s.t:
\begin{itemize}
\item if there is no node $\cnN_x$ where $\langle \cnSF_{i,x}, \cnTF_{x,i}, \cnCF_{x,i} \rangle\!\in\! \cnFPs$ then: $\cnP'_i\!=\!\cnP_i, \cnAss'_i\!=\!\cnP_i(\cnS_i) \mbox{ and } C\!=\!\emptyset$,
\item else:\\
$\begin{array}[t]{l@{}l@{}l@{}l@{}l}
~~~& \cnP'_i &~=~ \cnPSF_i\!\!&\!\!(T) \mbox{ and } \cnAss'_i = \cnP'_i(\cnS_i), \\
   &       T &~=~ \bigcup  \{ & \cnTF_{x,i}(\cnAss_x)~|~\langle \cnSF_{i,x}, \cnTF_{x,i}, \cnCF_{x,i} \rangle\in\cnFPs \mbox{~where~}\\
   &         &             & ~\cnAN_x =(\cnN_x,\cnS_x, \cnP_x, \cnAss_x) \in \cnANs\}\\
   &       C &~=~ \bigcup  \{ & \cnCF_{x,i}(\cnS_x)~|~\langle \cnSF_{i,x}, \cnTF_{x,i}, \cnCF_{x,i} \rangle\in\cnFPs \mbox{~where~}\\
   &         &             & ~\cnAN_x =(\cnN_x,\cnS_x, \cnP_x, \cnAss_x) \in \cnANs\}\\
\end{array}$
\end{itemize}
\end{definition}

The intuition for Definition~\ref{d:actionNU} is straightforward. Given a cognitive hierarchy and a node to be updated, the update process returns an identical hierarchy except for the updated node. This node is updated by first selecting a new active policy based on the task parameters of all the connected higher-level nodes. The new active policy is applied to the existing belief state to generate a new set of actions. Both these actions and the context from the connected higher-level nodes are then used to update the node's belief state.

Using the single node update, updating the entire hierarchy simply involves successively updating all its nodes.

\begin{definition}
\label{d:actionMU}
Let $\cnAH = (\cnH, \cnANs)$ be an active cognitive hierarchy with $\cnH = (\cnNs, \cnN_0, \cnFPs)$ and $\cnGG$ be the prediction graph induced by the task parameter functions in $\cnFPs$. The action process update of $\cnAH$, written $\cnAPGU(\cnAH)$, is an active cognitive model:
\[
\cnAH' = \cnAPNU(\ldots\cnAPNU(\cnAH,\cnAN_n),\ldots\cnAN_0)
\]
where the sequence $[\cnAN_n,\ldots, \cnAN_0]$ consists of all active cognitive nodes of the set $\cnANs$ such that the sequence satisfies the partial ordering induced by the prediction graph $\cnGG$.
\end{definition}

Importantly, the update ordering in Definition~\ref{d:actionMU} satisfies the partial ordering induced by the prediction graph, thus guaranteeing that the prediction update is well-defined.

\begin{lemma}
\label{l:apu}
For any active cognitive hierarchy $\cnAH$ the prediction process update of $\cnAH$ is well-defined.
\end{lemma}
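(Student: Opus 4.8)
The plan is to verify the three things that ``well-defined'' requires here: that a sequence meeting the conditions of Definition~\ref{d:actionMU} exists; that along any such sequence each single-node step $\cnAPNU$ is applicable, so the iterated composition is again an active cognitive hierarchy; and --- the crux --- that the hierarchy produced is independent of the chosen sequence. Existence is immediate: by Definition~\ref{d:chierarchy} the sensing graph is a finite DAG, hence so is its converse, the prediction graph $\cnGG$, and any finite DAG has a linear extension of the partial order it induces; this yields a sequence $[\cnAN_n,\ldots,\cnAN_0]$ in which, whenever $\cnGG$ has an edge $\cnN_x\!\to\!\cnN_i$, node $\cnN_x$ is processed before node $\cnN_i$, and since $\cnN_0$ is the unique sink of $\cnGG$ it is maximal and is legitimately taken last.

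For applicability I would induct along the fixed sequence with the invariant that, after any prefix has been processed, each already-processed node carries a replaced (belief, policy, actions) triple while each not-yet-processed node retains its original one. Reading off Definition~\ref{d:actionNU}, the update of $\cnAN_i$ reads only the active node for $\cnN_i$ together with those for the nodes $\cnN_x$ admitting a triple $\langle\cnSF_{i,x},\cnTF_{x,i},\cnCF_{x,i}\rangle\in\cnFPs$ --- equivalently, the $\cnGG$-in-neighbours of $\cnN_i$ --- and it rewrites only $\cnN_i$. Since the sequence respects $\cnGG$, every such $\cnN_x$ has been processed earlier and so is present in its updated form, while $\cnN_i$ occurs exactly once and so is present in original form; as $\cnPSF_i$, $\cnAUO_i$ and the policies are total functions (Definition~\ref{d:cnode}), the step is defined, and since it replaces $\cnAN_i=(\cnN_i,\ldots)$ by $\cnAN'_i=(\cnN_i,\ldots)$ it preserves the node-to-active-node correspondence of Definition~\ref{d:activehierarchy}. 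The invariant is maintained, so the composition is a genuine active cognitive hierarchy.

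For order independence I would make the output explicitly canonical. Define, by well-founded recursion on the DAG $\cnGG$, target data $P(v),A(v),B(v)$ for each node $\cnN_v$: $P(v)=\cnPSF_v(T(v))$ with $T(v)=\bigcup\{\cnTF_{x,v}(A(x)):\cnN_x\!\to\!\cnN_v\text{ in }\cnGG\}$; then $A(v)=P(v)(\cnS_v)$, where $\cnS_v$ is the belief state of $\cnN_v$ in $\cnAH$; and $B(v)=\cnAUO_v(C(v),A(v),\cnS_v)$ with $C(v)=\bigcup\{\cnCF_{x,v}(B(x)):\cnN_x\!\to\!\cnN_v\text{ in }\cnGG\}$ --- using the degenerate first branch of Definition~\ref{d:actionNU} when $\cnN_v$ has no $\cnGG$-in-neighbour. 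Acyclicity of $\cnGG$ is exactly what makes this recursion well-founded. I would then strengthen the invariant of the previous paragraph to: after any prefix is processed, each processed node $\cnN_v$ carries precisely $(\cnN_v,B(v),P(v),A(v))$. The step works because when $\cnN_w$ is processed next all its $\cnGG$-in-neighbours are already processed, hence already carry their target data, so the $T$ and $C$ assembled inside $\cnAPNU$ equal $T(w)$ and $C(w)$ and the step installs exactly $(\cnN_w,B(w),P(w),A(w))$. Hence $\cnAPGU(\cnAH)=(\cnH,\{(\cnN_v,B(v),P(v),A(v)):\cnN_v\in\cnNs\})$ for every admissible sequence, which is what the lemma asserts. (Equivalently, one can invoke the classical fact that any two linear extensions of a finite poset are connected by adjacent transpositions of incomparable elements and note that the updates of two $\cnGG$-incomparable --- hence $\cnGG$-non-adjacent --- nodes commute, their read and write sets being disjoint.) The only genuine delicacy throughout is the bookkeeping of which version of a node, original or already updated, is read at a given step; acyclicity of $\cnGG$ is precisely what keeps that bookkeeping consistent, and everything else is routine.
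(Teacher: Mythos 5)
Your proof is correct and follows essentially the same route as the paper, whose entire argument is the one-line remark that the claim ``follows from the DAG'': existence of an admissible sequence is a topological ordering of the prediction graph, and both applicability and order-independence rest on its acyclicity. You simply make explicit what the paper leaves implicit --- in particular the canonical well-founded recursion for $B(v),P(v),A(v)$ (equivalently, the commutation of updates of $\cnGG$-incomparable nodes), which is the substantive content of ``well-defined'' here.
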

\begin{proof}
Follows from the DAG. \comment{ (see~\cite{clark2016framework})}
\end{proof}

The final part of the process model, which we omit here, is the combined operator, $\cnPGU$, that first performs a sensing update followed by a prediction update. This operation follows exactly the original and similarly the theorem that the process model is well-defined also follows.

We can now apply the update process (sensing then prediction) to show how it operates on the running example.

\begin{example*}
When $\cnN_2$ senses the symbol $\exAH$, $\cnSF_{0,2}$ returns that ``A'' and ``H'' are equally likely, so $\cnOUO_2$ updates the diagnostic support of $\cnN_2$ to $\langle\langle 0.5,0.5\rangle\rangle$. On the other hand $\cnN_1$ and $\cnN_2$ unambiguously sense ``C'' and ``T'' respectively, so $\cnN_4$'s observation update operator, $\cnOUO_4$, will update its diagnostic support components to $\langle\langle 0,1\rangle,\langle 0.5,0.5\rangle,\langle 0,1\rangle\rangle$. The nodes overall belief, $\langle 0,1\rangle$, is the normalised product of the diagnostic support components and the causal support, indicating here the unambiguous recognition of ``CAT''.

Next, during prediction update, context from $\cnN_4$ is passed back down to $\cnN_2$, through $\cnSF_{4,2}$ and $\cnAUO_2$, updating the causal support of $\cnN_2$ to $\langle 0,1\rangle$. Hence, $\cnN_2$ is left with the belief state $\langle\langle\langle  0.5,0.5\rangle\rangle,\langle 0,1\rangle\rangle$, which when combined, indicates that the symbol $\exAH$ should be interpreted as an ``A''.
\end{example*}

We next appeal to another simple example to illustrate the use of context to improve world modelling and in turn behaviour generation in a cognitive hierarchy.


\section{A Simple Visual Servoing Example}
Consider a mobile camera tasked to track an object sliding down a frictionless inclined plane. The controller is constructed as a three-node cognitive hierarchy. Figure \ref{figTracker} depicts the cognitive hierarchy and the scene.

\begin{figure}[ht]
	\centering
	\includegraphics[width=0.3\textwidth]{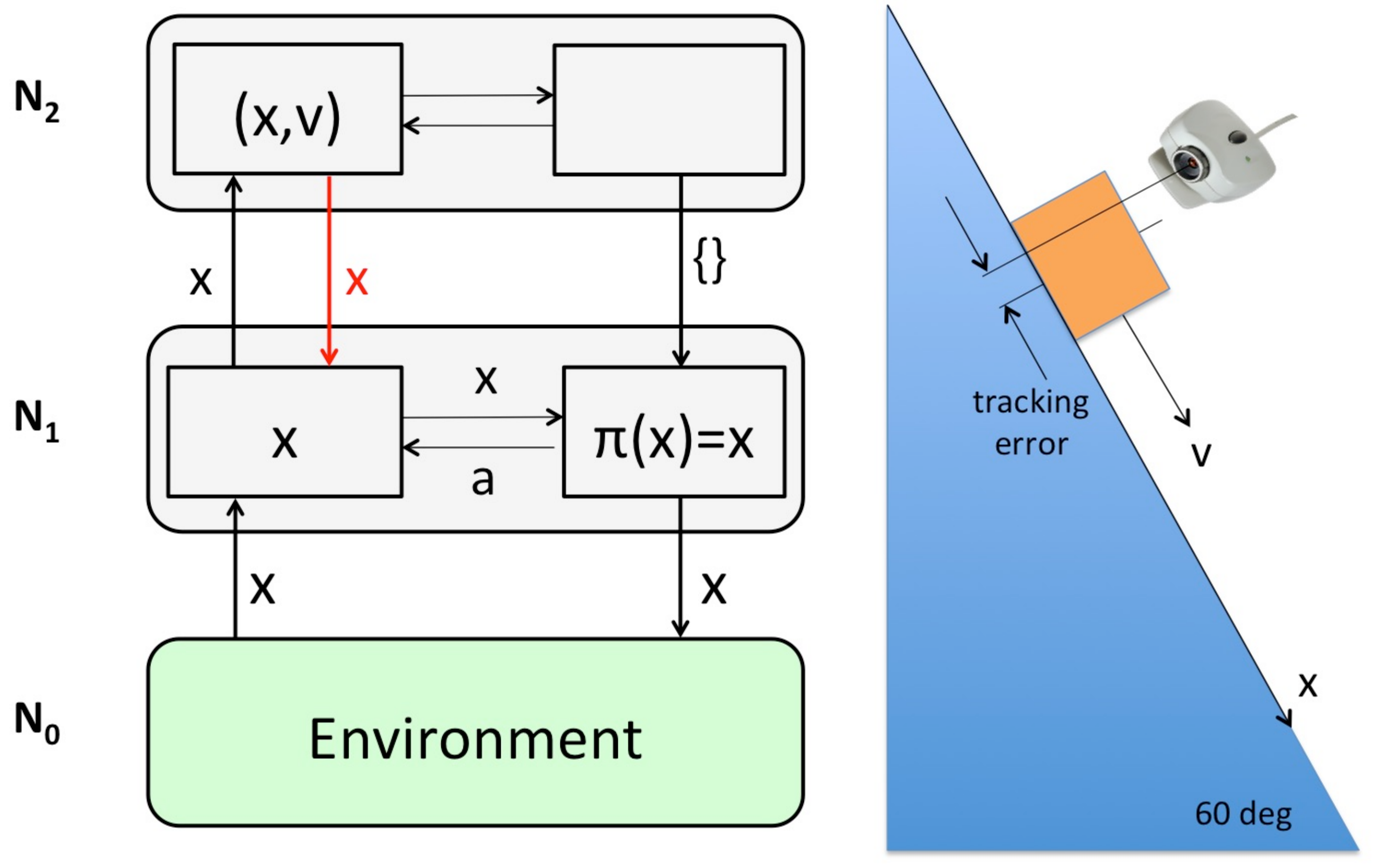}
	\caption{A three-node cognitive hierarchy controller tasked to visually follow an object. Context flow is shown in red.}
	\label{figTracker}
\end{figure} 

The performance of the controller will be determined by how well the camera keeps the object in the centre of its field-of-view, specifically the average error in the tracking distance over a time period of 3 seconds. 

The details of the instantiation of the cognitive hierarchy controller follow. The cognitive hierarchy is $\cnH\!=\!(\cnNs, \cnN_0, \cnFPs)$ with $\cnNs\!=\!\{ \cnN_0, \cnN_1, \cnN_2\}$. $\cnN_0$ is the unique opaque node representing the environment. The cognitive language for $\cnN_1$ is a tuple $\cnL_1\!\!\!=\!\!\!(\cnSs_1, \cnAs_1, \cnTs_1, \cnOs_1, \cnCs_1)$, and for $\cnN_2$ it is $\cnL_2\!\!\!=\!\!\!(\cnSs_2, \cnAs_2, \cnTs_2, \cnOs_2, \cnCs_2)$. The cognitive nodes are $\cnN_1 = (\cnL_1, \cnPs_1, \cnPSF_1, \cnOUO_1,  \cnAUO_1, \cnS^0_1, \cnP^0_1)$ and $\cnN_2 = (\cnL_2, \cnPs_2, \cnPSF_2, \cnOUO_2,  \cnAUO_2, \cnS^0_2, \cnP^0_2)$. For brevity we only describe the material functions. 

The belief state of $\cnN_1$ is the position of the object: $\cnSs_1\!=\!\{x~|~x \!\in\!{\mathbb R}\}$. The belief state of $\cnN_2$ is both the position and velocity of the object: $\cnSs_2\!=\!\{\langle x,v \rangle~|~x,v\!\in\!{\mathbb R}\}$. The object starts at rest on the inclined plane at the origin: $\cnS^0_1=0.0$ and 
 $\cnS^0_2=\langle 0.0,0.0 \rangle$.

$\cnN_1$ receives object position observations from the environment: $\cnOs_1=\{x~|~x \!\in\!{\mathbb R}\}$. These measurements are simulated from the physical properties of the scene and include a noise component to represent errors in the sensor measurements: $\cnSF_{0,1}(\cdot) = \{0.5kt^2+\nu\}$, with constant acceleration $k=8.49$ $m/s^2$, $t$ the elapsed time and $\nu$ zero mean Gaussian random noise with a standard deviation of $0.1$. The acceleration assumes an inclined plane of 60 degrees in a $9.8$ $m/s^2$ gravitational field.  The $\cnN_1$ observation update operator implements a Kalman filter with a fixed gain of $0.25$: $\cnOUO_1(\langle \{x\},y \rangle)=(1.0-0.25)y+0.25x$. 

$\cnN_2$ receives observations $\cnOs_2=\{x~|~x \!\in\!{\mathbb R}\}$ from $\cnN_1$: $\cnSF_{1,2}(x) = \{x\}$. In turn it updates its position estimate accepting the value from $\cnN_1$: $\cnOUO_2(\langle \{x\},\langle y,v \rangle \rangle)=\langle x,v \rangle $. The prediction update operator uses a physics model to estimate the new position and velocity of the object after time-step $\delta t = 0.05$ seconds: $\cnAUO_2(\langle \{\},\{\},\langle x,v \rangle \rangle)=\langle x+v\delta t+ 0.5k\delta t^2,v+k\delta t \rangle$ with known acceleration $k=8.49$.

Both $\cnN_1$ and $\cnN_2$ have one policy function each. The $\cnN_2$ policy selects the $\cnN_1$ policy. The effect of the $\cnN_1$ policy: $\pi_1(x)=\{x\}$, is to move the camera to the estimated position of the object via the task parameter function connecting the environment: $\cnTF_{1,0}(\{x\}) = \{x\}$.

We consider two versions of the $\cnN_1$ prediction update operator. Without context the next state is the commanded policy action: $\cnAUO_1(\langle \{x\},\{y\},z \rangle)=y$. With context the context enrichment function passes the $\cnN_2$ estimate of the position of the object to $\cnN_1$:  $\cnCF_{2,1}(\langle x,v \rangle)=\{x\}$, where $\cnCs_1=\{x~|~x \!\in\!{\mathbb R}\}$. The update operator becomes: $\cnAUO_1(\langle \{x\},\{y\},z \rangle)=x$.

When we simulate the dynamics and the repeated update of the cognitive hierarchy at $1/\delta t$ Hertz for 3 seconds, we find that without context the average tracking error is $2.004\pm 0.009$. Using context the average tracking error reduces to $0.125\pm 0.015$---a 94\% error reduction.\footnote{It is of course intuitive in this simple example that as $\cnN_2$ has the benefit of the knowledge of the transition dynamics of the object it can better estimate its position and provide this context to direct the camera.}

\section{Using Context to Track Objects Visually}
Object tracking has application in augmented reality, visual servoing, and man-machine interfaces. We consider the problem of on-line monocular model-based tracking of multiple objects without markers or texture,  using the 2D RGB camera built into the hand of a Baxter robot.  The use of natural object features makes this a challenging problem.

Current practice for tackling this problem is to use 3D knowledge in the form of a CAD model, from which to generate a set of edge points (control points) for the object~\cite{Lepetit:2005:MMT:1166405.1166406} . The idea is to track the corresponding 2D camera image points of the visible 3D control points as the object moves relatively to the camera. The new pose of the object relative to the camera is found by minimising the perspective re-projection error between the control points and their corresponding 2D image.

However, when multiple objects are tracked, independent CAD models fail to handle object occlusion. In place of the CAD models we use the machinery provided by a 3D physics simulator. The object-scene and virtual cameras from a simulator are ideal to model the higher level context for vision. We now describe how this approach is instantiated as a cognitive hierarchy with contextual feedback. It is important to note that the use of the physics simulator is not to replace the real-world, but is used as mental imagery efficiently representing the spatial belief state of the robot.

\subsection{Cognitive Hierarchy for Visual Tracking}
We focus on world-modelling in a two-node cognitive hierarchy (Figure \ref{fig2NodeCM}). The external world node that includes the Baxter robot, streams the camera pose and RGB images as sensory input to the arm node. The arm node belief state $\cnS=\{p^a\} \cup \{\langle p_a^i, c^i \rangle | $object$\;i \}$, where $p^a$ is the arm pose, and for all recognised objects $i$ in the field of view of the arm camera, $p_a^i$ is the object pose relative to the arm camera, and $c^i$ is the set of object edge lines and their depth.  The objects in this case include scattered cubes on a table. Information from the arm node is sent to the spatial node that employs a Gazebo physics simulator as mental imagery to model the objects.

\begin{figure}[ht]
	\centering
	\includegraphics[width=0.47\textwidth]{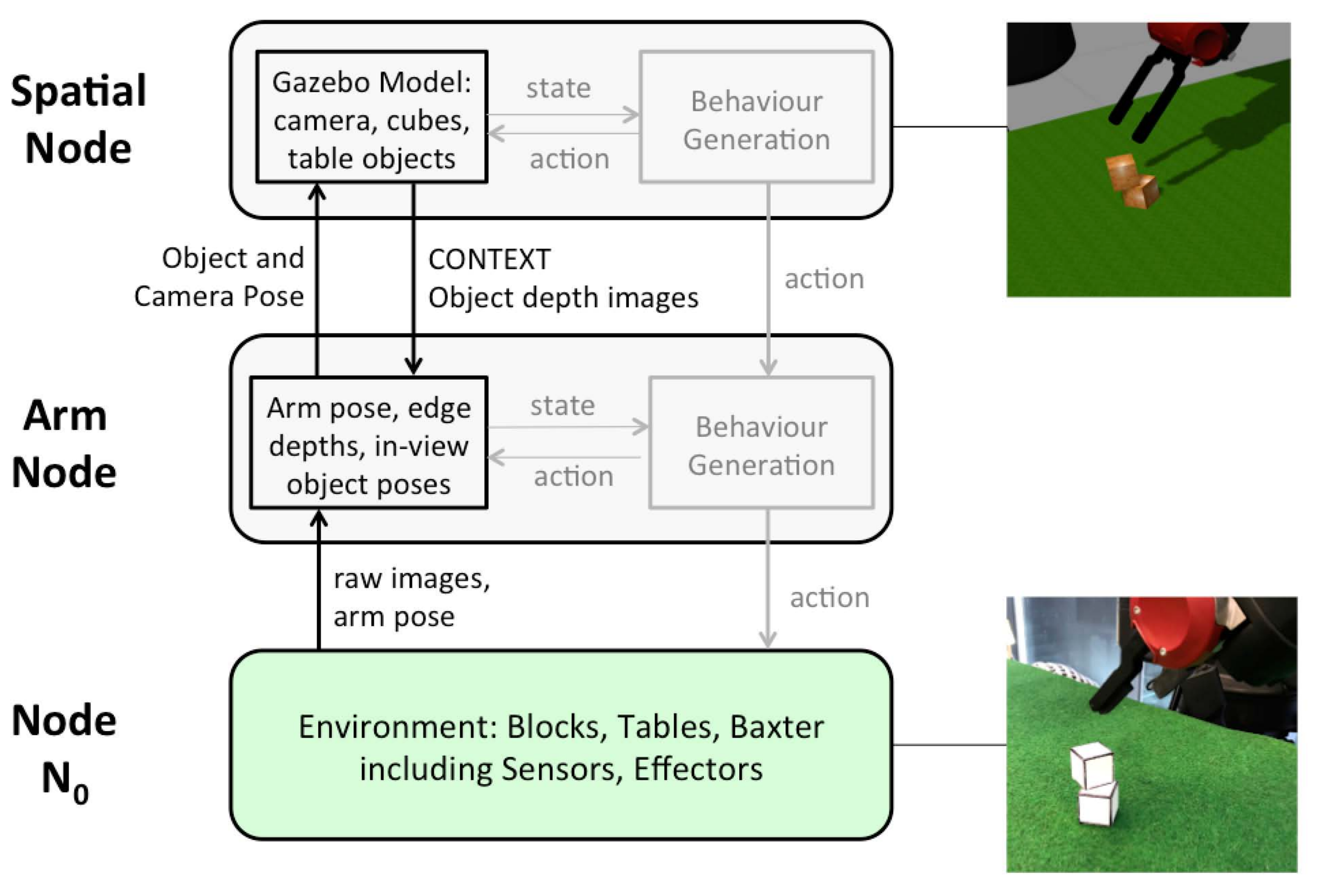}
	\caption{Cognitive hierarchy comprising an arm node and a spatial node. Context from the spatial node is in the form of an object segmented depth image from a simulated special camera that shadows the real camera.}
	\label{fig2NodeCM}
\end{figure} 

A novel feature of the spatial node is that it simulates the robot's arm camera as an object aware depth camera. No such camera exist reality, but the Gazebo spatial belief state of the robot is able to not only provide a depth image, but one that segments the depth image by object. This object aware depth image provides the context to the arm node to generate the required control points. 

\subsection{Update Functions and Process Update}
We now describe the update functions and a single cycle of the process update for this cognitive hierarchy.

The real monocular RGB arm camera is simulated in Gazebo with an object aware depth camera with identical characteristics (i.e. the same intrinsic camera matrix). The simulated camera then produces depth and an object segmentation images from the simulated objects that corresponds to the actual camera image. This vital contextual information is then used for correcting the pose of the visible objects.

\comment{
This contextual information is useful for correcting the pose of visible objects as described below. The context enrichment function from the spatial node to the arm node will be described later in the update cycle.
}

The process update starts with the sensing function $\cnSF_{N_0,Arm}$ that takes the raw camera image and observes all edges in the image, represented as a set of line segments, $l$. 
\[ \cnSF_{N_0,Arm}(\{rawImage\}) = \{l\} \]

The observation update operator $\cnOUO_{Arm}$ takes the expected edge lines $c^i$ for each object $i$ and transforms the lines to best match the image edge lines $l$ \cite{Lepetit:2005:MMT:1166405.1166406}. The update function uses the OpenCV function \emph{solvePnP} to find a corrected pose $p^i_a$ for each object $i$ relative to the arm-camera $a$ \footnote{The pose of a rigid object in 3D space has 6 degrees of freedom, three describing its translated position, and three the rotation or orientation, relative to a reference pose.}.
\[ \cnOUO_{Arm}(\{ l, c^i  | \mbox{object}\;i\} ) = \{p_a^i | \mbox{object}\;i\}  \]

The sensing function from the arm to spatial node takes the corrected pose $p_a^i$ for each object $i$, relative to the camera frame $a$, and transforms it into the Gazebo reference frame via the Baxter's reference frame given the camera pose $p^a$.
\[ \cnSF_{Arm,Spatial}(\{p^a,  \langle p_a^i, c^i \rangle | \mbox{object}\;i \}) = \{g_a^i | \mbox{object}\;i\} \]

The spatial node observation update $\cnOUO_{Spatial}$, updates the pose of all viewed objects $g_a^i$ in the Gazebo physics simulator. Note $\{g_a^i | \mbox{object}\; i \} \subset \mbox{gazebo state}$.
\[ \cnOUO_{Spatial}(\{g_a^i | \mbox{object}\;i\}) = \mbox{gazebo.move}(i, g_a^i) \;\; \forall i  \]

The update cycle now proceeds down the hierarchy with prediction updates. The prediction update for the spatial node $\cnAUO_{Spatial}$ consists of predicting the interaction of objects in the simulator under gravity. Noise introduced during the observation update may result in objects separating due to detected collisions or settling under gravity.
\[ \cnAUO_{Spatial}( \mbox{gazebo state}) = \mbox{gazebo.simulate}(\mbox{gazebo state})) \]

We now turn to the context enrichment function $\cnCF_{Spatial,Arm}$ that extracts predicted camera image edge lines and depth data for each object in view of the simulator. 
\[ \cnCF_{Spatial,Arm}(\mbox{gazebo state})  = \{c^i | \mbox{object}\;i\} \]
\begin{figure}[ht]
	\centering
	\includegraphics[width=0.48\textwidth]{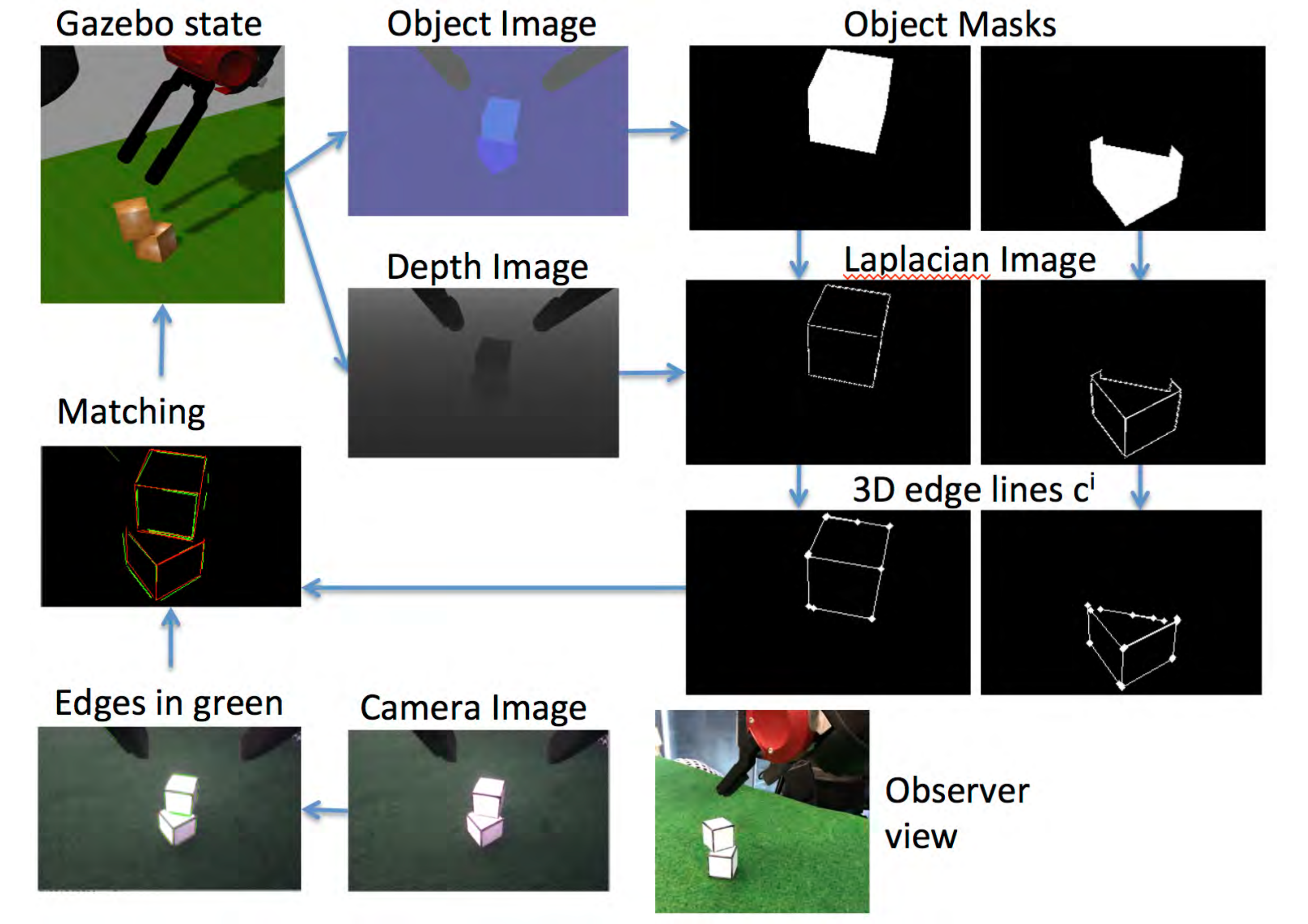}
	\caption{The Process update showing stages of the context enrichment function and the matching of contextual information to the real camera to correct the arm and spatial node belief state.}
	\label{figContext}
\end{figure} 
The stages of the context enrichment function $\cnCF_{Spatial,Arm}$ are shown in Figure \ref{figContext}. 
The simulated depth camera extracts an object image that identifies the object seen at every pixel location.  It also extracts a depth image that gives the depth from the camera of every pixel. The object image is used to mask out each object in turn. Applying a Laplacian function to the part of the depth image masked out by the object yields all visible edges of the object. A Hough line transform identifies line end points in the Laplacian image and finds the depth of their endpoints from the depth image, producing $c^i$. 
\begin{figure}[ht]
	\centering
	\includegraphics[width=0.47\textwidth]{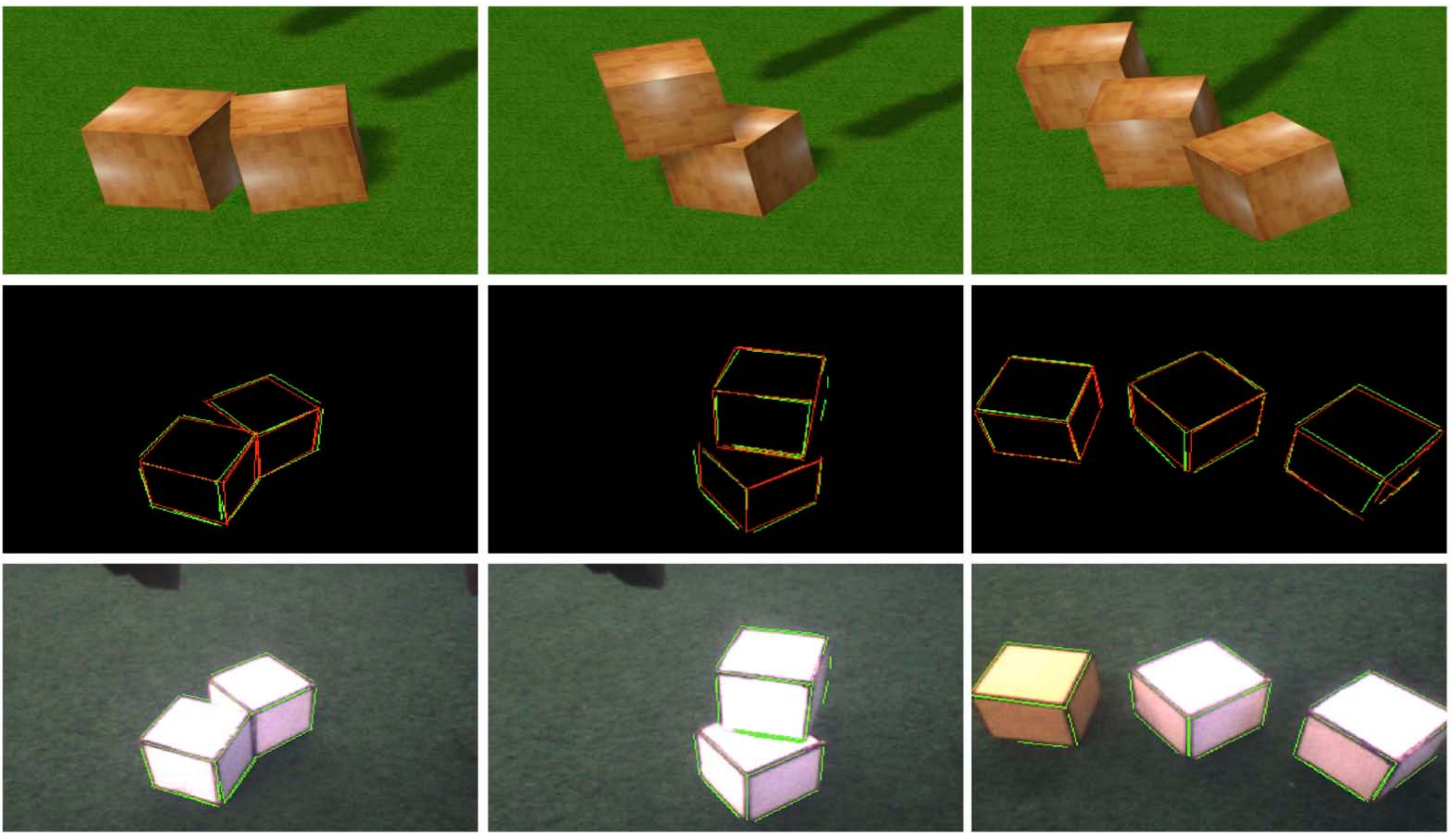}
	\caption{Tracking several cube configurations. Top row: Gazebo GUI showing spatial node state. 2nd row: matching real image edges in green to simulated image edges in red. Bottom row: camera image overlaid with edges in green.}
	\label{figCorrespondence}
\end{figure} 

Figure \ref{figCorrespondence} shows the cognitive hierarchy tracking several different cube configurations. This is only possible given the context from the spatial belief state. Keeping track of the pose of objects allows behaviours to be generated that for example pick up a cube with appropriately oriented grippers.


\section{Related Support and Conclusion}
There is considerable evidence supporting the existence and usefulness of top-down contextual information. Reliability \cite{Biederman1981On-the-semantic} and speed \cite{cavanagh91whatsup} of scene analysis provides early evidence. 

These observations are further supported by neuroscience, suggesting that feedback pathways from higher more abstract processing areas of the brain down to areas closer to the sensors are greater than those transmitting information upwards \cite{hawkins04on}. The authors summarise the process - ``what is actually happening flows up, and what you expect to happen flows down". \citeauthor{Gilbert2013Top-down-influe}~(\citeyear{Gilbert2013Top-down-influe}) argue that the traditional idea that the processing of visual information consists of a sequence of feedforward operations needs to be supplemented by top-down contextual influences.

In the field of robotics, recent work in online interactive perception shows the benefit of predicted measurements from one level being passed to the next-lower level as state predictions \cite{DBLP:conf/iros/MartinB14}.

This paper has included and formalised the essential element of context in the meta framework of cognitive hierarchies. The process model of an active cognitive hierarchy has been revised to include context updates satisfying the partial order induced by the prediction graph. We have illustrated the role of context with two simple examples and a novel way to track the pose of texture-less objects with a single 2D camera. As a by-product contribution we prove that general belief propagation in causal trees can be embedded into our framework testifying to its representation versatility. 

%



\begin{appendices}
\section{Causal Networks  as Cognitive Hierarchies}

\newcommand{\plTree}{\mbox{$\mathcal{T}$}}
\newcommand{\plProc}{\mbox{${P}$}}
\newcommand{\plProcs}{\mbox{${\mathcal{P}}$}}
\newcommand{\plProp}{\mathtext{Prop}}
\newcommand{\plBel}{\mathtext{BEL}}
\newcommand{\cnBel}{\mathtext{BEL}}
\newcommand{\plReal}{\mbox{${\mathbb R}$}}

The motivating example highlighted the use of context in a cognitive hierarchy inspired by belief propagation in causal trees. In this appendix we extend the example to the general result that any Bayesian causal tree can be encoded as a cognitive hierarchy. We do this by constructively showing how to encode a causal tree as a cognitive hierarchy and proving the correctness of this method with respect to propagating changes through the tree.

Pearl describes a causal tree as a set of \emph{processors} where the connection between processors is explicitly represented within the processors themselves. Each processor maintains \emph{diagnostic} and \emph{causal} supports, as well as maintaining a conditional probability matrix for translating to the representation of higher-level processors. The description of the operational behaviour of causal trees is presented throughout Chapter~4 (and summarised in Figure~4.15) of ~\citeauthor{pearl88probabilistic}~(\citeyear{pearl88probabilistic}).

The cognitive hierarchies introduced here are concerned with robotic systems and consequently maintain an explicit notion of sensing over time. In contrast causal networks are less precise about external inputs and changes over time. As a bridge, we model that each processor has a diagnostic support component that can be set externally.  Finally, note that we adopt the convenience notation $\cnEF$ to represent a function of arbitrary arity that always returns the empty set.

\begin{definition}
\label{d:transform}
Let $\{\plProc_1,\ldots\plProc_n\}$ be a causal tree. We construct a corresponding cognitive hierarchy $\cnH = (\{\cnN_0, \cnN_1,\ldots, \cnN_n\}, \cnN_0, \cnFPs)$ as follows:
\begin{itemize}

\item For processor $\plProc_i$  with $m$ children, and diagnostic and causal supports $d,c\in{\mathbb R}^n$, define $\cnSs_i\!=\!\{\langle\langle d_E,d_1,\ldots, d_m \rangle, c'\rangle | d_E,d_1,\ldots,d_m,c'\in{\mathbb R}^n\}$, with initial belief state $\cnS_i\!=\!\langle\langle d,\ldots,d\rangle, c\rangle$. Define $\cnOs_i\!=\!\{\langle d_E, d_1,\ldots, d_m\rangle | d_E,d_1,\ldots,d_m\in{\mathbb R}^n\}$ and $\cnCs_i\!=\!\mathbb R^n$.

\item For processor $\plProc_i$ with corresponding cognitive node $\cnN_i$, define $\cnOUO_i(o, \langle \vec{d}, c\rangle) = \langle\Sigma_{\vec{d'}\in o}\vec{d'}, c\rangle$, and $\cnAUO_i(\{c'\}, \emptyset , \langle \vec{d}, c\rangle)=\langle \vec{d},c'\rangle$.

\item For each pair of processors $\plProc_i$ and $\plProc_j$, where $\plProc_j$ is the $k$-th child of $\plProc_i$'s $m$ children (from processor subscript numbering), and $M_j$ is the conditional probability matrix of $\plProc_j$, then define a triple $\langle \cnSF_{j,i}, \cnEF, \cnCF_{i,j} \rangle \in \cnFPs$ s.t:

\begin{itemize}

\item $\cnSF_{j,i}(\langle \vec{d},c\rangle) = \{\langle d_E, d_1,\ldots,d_m\rangle\}$, where $d_{h\not=k}$ are zeroed vectors and $d_k = \cnNorm\cdot M_j \cdot (\prod_{d'\in \vec{d}}d')^T$.

\item $\cnCF_{i,j}(\langle \langle d_E, d_1, \ldots,d_m \rangle, c \rangle) = \{c'\}$, such that $c' = \cnNorm\cdot( \prod_{h\not=k}d_h \cdot c)\cdot M_j$.

\item where $\cnNorm$ is a normalisation constant for the respective vectors, and $x^T$ is the transpose of vector $x$.

\end{itemize}

\item For processors $\plProc_i$, with diagnostic support $d\in\plReal^n$, define a triple $\langle \cnSF_{0,i}, \cnEF, \cnEF \rangle \in \cnFPs$ where $\cnSF_{0,i}(\cnS_0\in\cnSs_0) = \{\langle d_E, d_Z,\ldots,d_Z\rangle\}$, where $d_Z$ is a zeroed vector and $d_E \in \plReal^n$ is the external input of $\plProc_i$.

\end{itemize}
\end{definition}

While notationally dense, Definition~\ref{d:transform} is simply a generalisation of the construction used in the running example and is a direct encoding of causal trees. Note, this construction could be further extended to poly-trees, which Pearl also considers, but would require a more complex encoding.

To establish the correctness of this transformation we can compare how the structures evolve with sensing. The belief measure of a processor is captured as the normalised product of the diagnostic and causal supports, $\plBel(P_i) = \cnNorm\cdot d_i \cdot c_i$. However, for a cognitive node the diagnostic support needs to be computed from its components. Hence, given the belief state $\langle \langle d_E,d1, \ldots,d_m \rangle, c \rangle$ of an active node $\cnAN_i$ with $m$ children, we can compute the belief as $\cnBel(\cnAN_i) = \cnNorm\cdot \prod_{j=1}^m d_j \cdot c$.

\begin{lemma}
Given a causal tree $\{\plProc_1,\ldots\plProc_n\}$ and a corresponding cognitive hierarchy $\cnH$ constructed via Definition~\ref{d:transform}, then the causal tree and the initial active cognitive hierarchy corresponding to $\cnH$ share the same belief.
\end{lemma}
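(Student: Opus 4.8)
The plan is to reduce the statement to a pointwise verification. Saying that the causal tree and the initial active cognitive hierarchy ``share the same belief'' means that, under the processor--node correspondence $\plProc_i\leftrightarrow\cnN_i$ fixed by Definition~\ref{d:transform}, we have $\cnBel(\cnAN_i)=\plBel(\plProc_i)$ for every $i\in\{1,\ldots,n\}$; the extra world node $\cnN_0$ has no counterpart in the tree and is irrelevant here. So I would fix an arbitrary index $i$, write out each side from its definition, show the two sides coincide as normalised probability vectors, and then appeal to arbitrariness of $i$.

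Expanding the two sides is mostly bookkeeping. On the causal-tree side, Pearl's belief measure is $\plBel(\plProc_i)=\cnNorm\cdot d_i\cdot c_i$, with $d_i$ and $c_i$ the diagnostic and causal supports of $\plProc_i$ and all vector products taken componentwise. On the hierarchy side, the construction of Definition~\ref{d:transform} gives $\cnN_i$ the initial belief state $\langle\langle d_i,\ldots,d_i\rangle,c_i\rangle$: the causal support is $c_i$, and each diagnostic slot of the inner vector carries a copy of $\plProc_i$'s diagnostic support $d_i$. Since an initial active cognitive hierarchy copies every node's initial belief state verbatim, $\cnAN_i$ holds exactly this state, and the belief formula for an active node then evaluates to $\cnBel(\cnAN_i)=\cnNorm\cdot\prod_{j=1}^{m}d_j\cdot c_i=\cnNorm\cdot d_i^{m}\cdot c_i$, where $m$ is the number of children of $\plProc_i$ and each $d_j$ equals $d_i$.

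The single substantive point is to reconcile the lone factor $d_i$ on the causal-tree side with the $m$-fold componentwise power $d_i^{m}$ on the hierarchy side. The key fact is that in the initial, evidence-free state of a causal tree every diagnostic support is a uniform vector: all $\lambda$-messages start out equal to the all-ones vector and $d_i$ is their componentwise product (together with any direct external input, which is likewise uniform before sensing). Hence $d_i^{m}$ is again proportional to the all-ones vector, differing from $d_i$ only by a single positive scalar, and that scalar is absorbed by the normalisation constant $\cnNorm$; therefore $\cnNorm\cdot d_i^{m}\cdot c_i=\cnNorm\cdot d_i\cdot c_i$ as probability vectors, which chains to $\cnBel(\cnAN_i)=\plBel(\plProc_i)$. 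I expect this redundancy of the diagnostic slots to be the only real obstacle: one has to be careful that the repeated copies of $d_i$ are genuinely harmless, and this is exactly where the evidence-free (hence uniform) nature of the initial state is used; the remainder is a direct unfolding of Definition~\ref{d:transform} and of the notion of an initial active cognitive hierarchy.
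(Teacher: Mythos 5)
Your proposal is correct and follows the same route as the paper's proof, which is a one-line ``by inspection, $\plBel(\plProc_i)=\cnBel(\cnAN_i)$ for each $i$'': fix $i$, unfold both sides from the definitions, and compare. The difference is that you actually carry out the inspection and, in doing so, isolate the one point where it is not literally an identity: the initial belief state $\langle\langle d,\ldots,d\rangle,c\rangle$ stores $m$ (indeed $m{+}1$) copies of the processor's diagnostic support, so $\cnBel(\cnAN_i)$ evaluates to $\cnNorm\cdot d^{m}\cdot c$ (componentwise) rather than $\cnNorm\cdot d\cdot c$. The paper's proof silently passes over this. Your resolution --- that in the evidence-free initial configuration all $\lambda$-messages, hence $d$, are uniform, so $d^{m}$ is proportional to $d$ and the discrepancy is absorbed by $\cnNorm$ --- is the right one, and it is worth making explicit that this is where the hypothesis of being the \emph{initial} state is genuinely used: Definition~\ref{d:transform} allows arbitrary $d\in\mathbb{R}^n$, and for a non-uniform (and non-$\{0,1\}$-valued) $d$ the two sides would differ even after normalisation. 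So your write-up is not a different proof strategy but a more honest version of the same one, and it surfaces an implicit assumption (uniform, pre-evidence diagnostic supports) that the lemma statement and the paper's proof both leave tacit.
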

\begin{proof}
By inspection, $\plBel(\plProc_i) = \cnBel(\cnAN_i)$ for each $i$.
\end{proof}

Now, we establish that propagating changes through an active cognitive hierarchy is consistent with propagating beliefs through a causal tree. We abuse notation here to express the overall belief of a casual tree (resp. active cognitive hierarchy) as simply the beliefs of its processors (resp. nodes).

\begin{theorem}
\label{t:equiv}
Let $\plTree$ be a causal tree and $\cnAH$ be the corresponding active cognitive hierarchy constructed via Definition~\ref{d:transform}, such that $\plBel(\plTree) = \cnBel(\cnAH)$. Then for any changes to the external diagnostic supports of the processors and corresponding changes to the sensing inputs for the active cognitive hierarchy, $\plBel(\plProp(\plTree)) = \cnBel(\cnPGU(\cnAH))$.
\end{theorem}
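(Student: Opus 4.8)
The plan is to show that a single application of $\cnPGU$ --- one sensing sweep $\cnSPGU$ followed by one prediction sweep $\cnAPGU$ --- carries out exactly the two phases of Pearl's $\lambda$/$\pi$ message passing, and then to invoke the correctness of tree propagation: on a tree, a complete upward pass of $\lambda$-messages followed by a complete downward pass of $\pi$-messages from the root leaves every processor with the belief it has under $\plProp$ (see \cite{pearl88probabilistic}, Ch.~4). Throughout I use the dictionary implicit in Definition~\ref{d:transform}: for a processor $\plProc_i$ with $m$ children, the component $d_k$ of the belief state $\langle\langle d_E,d_1,\ldots,d_m\rangle,c\rangle$ of $\cnN_i$ plays the role of Pearl's message from the $k$-th child, $d_E$ that of the external diagnostic input, and $c$ that of the causal support; hence $\cnBel(\cnAN_i)$, the normalised product of the diagnostic components with $c$, equals Pearl's $\cnNorm\cdot\lambda(x_i)\cdot\pi(x_i)$, since the product of all diagnostic components is $\lambda(x_i)=\lambda_E(x_i)\prod_k\lambda_k(x_i)$.

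First I would analyse the sensing sweep. By the construction of $\cnFPs$ the sensing graph is the causal tree with every edge directed from child to parent, together with an edge $\cnN_0\to\cnN_i$ for each $i$; so a topological order of it processes $\cnN_0$ first and every node after all of its children. By induction along this order I would show that once $\cnN_i$ has been processed its belief state is $\langle\langle d_E^i,\lambda_1(x_i),\ldots,\lambda_m(x_i)\rangle,c_i\rangle$, with $d_E^i$ the new external input and each $\lambda_k(x_i)$ the settled message from the $k$-th child. The inductive step is immediate: each already-processed child $\cnN_j$ contributes, via $\cnSF_{j,i}$, the vector $\cnNorm\cdot M_j\cdot(\prod_{d'\in\vec{d}}d')^T=\cnNorm\cdot M_j\cdot\lambda(x_j)^T$ in position $k$ and zeros elsewhere, $\cnSF_{0,i}$ contributes $d_E^i$ in position $E$, and $\cnOUO_i$ sums these single-support tuples componentwise, assembling exactly the claimed vector. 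Since $\cnSPGU$ never touches causal components, every $c_i$ retains its pre-sweep value at the end of this phase.

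Next I would analyse the prediction sweep. Here all policies and task-parameter functions are trivial ($\cnEF$), so the action sets are always empty and $\cnAUO_i(\{c'\},\emptyset,\langle\vec{d},c\rangle)=\langle\vec{d},c'\rangle$ merely overwrites the causal support, leaving the diagnostic components from the first phase intact. The prediction graph is the converse of the sensing graph, so a topological order of it processes the tree root first, every parent before its children, and $\cnN_0$ last (there the incoming context is empty and the step is vacuous for the comparison). By induction down the tree I would show that after $\cnN_i$ is processed, $c_i=\pi(x_i)$: the root receives no context, so its causal support keeps its correct pre-sweep value, matching Pearl's fixed root prior; and a child $\cnN_j$ of $\cnN_i$ receives, via $\cnCF_{i,j}$ applied to the already-updated state of $\cnN_i$, the value $\cnNorm\cdot(\prod_{h\neq k}d_h\cdot c_i)\cdot M_j=\cnNorm\cdot\bigl(\lambda_E(x_i)\,\pi(x_i)\prod_{h\neq k}\lambda_h(x_i)\bigr)\cdot M_j$, which is exactly the causal support $\pi(x_j)$ Pearl assigns to $x_j$. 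Combining the two phases, every $\cnN_i$ ends with diagnostic components equal to the settled $\lambda$-messages and causal support equal to the settled $\pi(x_i)$, whence $\cnBel(\cnAN_i)=\cnNorm\cdot\lambda(x_i)\cdot\pi(x_i)=\plBel(\plProc_i)$ for the updated evidence, i.e.\ $\cnBel(\cnPGU(\cnAH))=\plBel(\plProp(\plTree))$.

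The only real difficulty is order-bookkeeping rather than any deep step: I must ensure the whole $\lambda$-pass completes before any $\pi$-message is formed --- which is precisely why $\cnPGU$ is sensing-\emph{then}-prediction --- and that the topological orders of the two graphs genuinely realise ``all children before the parent'' and ``parent before all children'', both of which fall out of the direction of $\cnSF_{j,i}$ fixed in Definition~\ref{d:transform}. Two smaller points each need a line: the extra edges $\cnN_0\to\cnN_i$ are harmless, since $\cnN_0$'s contribution is just the static external-evidence injection and $\cnN_0$ is a source, hence processed before every $\cnN_i$; and the normalisation constants $\cnNorm$ are harmless, since $\cnBel$ and Pearl's beliefs are both defined only up to a positive scalar.
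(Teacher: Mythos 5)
Your proposal is correct and follows essentially the same route as the paper's own (much terser) proof: identify the sensing sweep with Pearl's upward $\lambda$-pass and the prediction sweep with the downward $\pi$-pass, note that the topological orders of the sensing and prediction graphs realise ``children before parent'' and ``parent before children'' respectively, and then appeal to the single-pass convergence of belief propagation on trees. You have simply filled in the inductive bookkeeping that the paper leaves implicit.
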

\begin{proof}
Pearl establishes that changes propagated through a causal tree converge with a single pass up and down the tree. Any such pass satisfies the partial ordering for the cognitive hierarchy process model. Hence the proof involves the iterative application of the process model, confirming at each step that the beliefs of the processors and nodes align.
\end{proof}

Theorem~\ref{t:equiv} establishes that Bayesian causal trees can be captured as cognitive hierarchies. This highlights the significance of extending cognitive hierarchies to include context, allowing for a richer set of potential applications.

\end{appendices}

\newpage

\section*{Acknowledgments}
This material is based upon work supported by the Asian Office of Aerospace Research and Development (AOARD) under Award No: FA2386-15-1-0005. This research was also supported under Australian Research Council's (ARC) {\em Discovery Projects\/} funding scheme (project number~DP\,150103035). Michael Thielscher is also affiliated with the University of Western Sydney.

\section*{Disclaimer}
Any opinions, findings, and conclusions or recommendations expressed in this publication are those of the authors and do not necessarily reflect the views of the AOARD.

\bibliographystyle{aaai}
\bibliography{paper}

\end{document}